\newtheorem{dfn}{Definition}
\newtheorem {exa}{Example}
\newtheorem{theorem}{Theorem}
\newcommand{\mc}{\mathcal}
\newcommand{\freepreg}[1]{\mathsf{Preg}_{#1}}
\newcommand{\lang}[1]{\ensuremath{\textit{#1}}}
\newcommand{\FHilb}{\mathbf{FHilb}}
\newcommand{\functor}[1]{\mathsf{#1}}
\newcommand{\CPM}[1]{\mathbf{CPM}(#1)}
\newcommand{\CPMC}{\CPM{\mathcal{C}}}
\newcommand{\cpmsem}{\functor{S}}
\newcommand{\cpmpure}{\functor{E}}
\newcommand{\sem}[1]{\rho( #1 )}
\newcommand{\define}[1]{{\bf #1}}
\tikzstyle{none}=[inner sep=0pt]
\tikzstyle{small circ}=[inner sep=0pt, circle, minimum size = 0.2cm,fill=white,draw=black]
\tikzstyle{small_node}=[inner sep=0pt, circle, minimum size = 0.2cm,fill=white,draw=black]
\tikzstyle{blank}=[inner sep=0pt, circle,fill=white,draw=white]
\tikzstyle{box}=[rectangle, minimum size = 0.5cm,fill=white,draw=black]
\definecolor{amber}{rgb}{1.0, 0.49, 0.0}
\definecolor{amaranth}{rgb}{0.9, 0.17, 0.31}
\newcommand{\tr}{\textrm{tr}}
\newcommand{\PSD}{\mathrm{PSD}}
\newcommand{\M}{\mathcal{M}}
\newcommand{\be}{\begin{eqnarray}}
\newcommand{\ee}{\end{eqnarray}}
\newcommand{\bi}{\begin{itemize}}
\newcommand{\ei}{\end{itemize}}
\newtheorem{lemma}[theorem]{Lemma}
\newcommand{\diag}{\mathrm{diag}}
\title{Cats climb entails mammals move: preserving hyponymy in compositional distributional semantics}
\author[1]{Gemma De las Cuevas}
\author[1]{Andreas Klingler}
\author[2]{Martha Lewis}
\author[3]{Tim Netzer}
\affil[1]{\small{Institute for Theoretical Physics, Technikerstr.\ 21a, A-6020 Innsbruck, Austria}}
\affil[2]{ILLC, University of Amsterdam, Amsterdam, The Netherlands}
\affil[3]{Department of Mathematics, Technikerstr.\ 13, A-6020 Innsbruck, Austria}
\date{\small\today}
\begin{document}
\maketitle

\begin{abstract}
To give vector-based representations of meaning more structure, one approach is to use positive semidefinite (psd) matrices. These allow us to model similarity of words as well as the \emph{hyponymy} or \emph{is-a} relationship. Psd matrices can be learnt relatively easily in a given vector space $M\otimes M*$, but to compose words to form phrases and sentences, we need representations in larger spaces. In this paper, we introduce a generic way of composing the psd matrices corresponding to words. 
We propose that psd matrices for verbs, adjectives, and other functional words be lifted to completely positive (CP) maps that match their grammatical type. 
This lifting is carried out by our composition rule called  Compression, \textsf{Compr}. 
In contrast to previous composition rules like \textsf{Fuzz} and \textsf{Phaser} (a.k.a.\  \textsf{KMult} and \textsf{BMult}), \textsf{Compr} preserves hyponymy. 
Mathematically, \textsf{Compr} is itself a CP map, 
and is therefore linear and generally non-commutative. 
We give a number of proposals for the structure of \textsf{Compr}, based on spiders, cups and caps, and generate a range of composition rules. 
We test these rules on a small sentence entailment dataset, and see some improvements over the performance of \textsf{Fuzz} and \textsf{Phaser}.
\end{abstract}

\section{Introduction}
\label{sec:intro}

Vector-based representations of words, with similarity measured by the inner product of the normalised word vectors, have been extremely successful in a number of applications. However, as well as similarity, there are a number of other important relations between words or concepts, one of these being \emph{hyponymy} or the \emph{is-a} relation. Examples of this are that \emph{cat} is a hyponym of \emph{mammal}, but we can also apply this to verbs, and say that \emph{sprint} is a hyponym of \emph{run}. Within standard vector-based semantics based on  co-occurrence statistics, there is no standard way of representing hyponymy between word vectors. There have been a number of alternative approaches to building word vectors that can represent these relationships, but most of these operate at the single word level.
Of course, words can be composed to form phrases and sentences, and we use a variant of the categorical compositional distributional (DisCoCat) approach introduced in \cite{Co10c}. This approach uses a category-theoretic stance. It models syntax in one category, call it the \emph{grammar category}, and semantics in another, call it the \emph{meaning category}. A functor from the grammar category to the meaning category is defined, so that the grammatical reductions on the syntactic side can be translated into morphisms on the meaning side. The standard instantiation models meaning within the category of vector spaces and linear transformations, so that nouns are represented as vectors, and \emph{functional words} such as verbs and adjectives are represented as multilinear maps, or alternatively, matrices and tensors.

Within DisCoCat, choices can be made about the meaning category\footnote{Choices can also be made for the grammar category, but we do not discuss that in this work.}. One choice is to use the category $\CPM{\FHilb}$ of Hilbert spaces and completely positive maps between them. In this category, words are represented as positive semidefinite (psd) matrices. Psd matrices have a  natural partial order called the L\"owner order, and  this order is used to model hyponymy. 
This approach was developed in \cite{sadrzadeh2018, bankova2019, lewisranlp}, and the use of psd matrices to represent words has  also been used in \cite{mots, Co20}. One of the drawbacks of this approach is that learning psd matrices from text is difficult, in particular the larger matrices that are required for functional words. Therefore, in \cite{lewisranlp, Co20},  composition rules for psd matrices have been explored. 
In \cite{Co20} these  composition rules are called \textsf{Fuzz} and \textsf{Phaser}, in \cite{lewisranlp} they are \textsf{KMult} and \textsf{BMult},  respectively. 
For this paper we stick with the guitar pedal terminology.

One of the drawbacks of these composition rules is that they do not preserve hyponymy. That is, given two pairs of words in a hyponym-hypernym relationship, the combination of the two hyponyms is not necessarily a hyponym of the combination of the two hypernyms:
\[
\lang{noun}_1 \leqslant \lang{noun}_2 \text{ and } \lang{verb}_1  \leqslant  \lang{verb}_2 \text{ does not imply } \lang{noun}_1 \ast \lang{verb}_1  \leqslant  \lang{noun}_2\ast \lang{verb}_2
\]
where the nouns and verbs are psd matrices, $\leqslant$  is the L\"owner ordering, and $\ast$ is one of \textsf{Fuzz} or \textsf{Phaser}.

The goal of this paper is to define a composition rule which is (i) positivity preserving, and (ii) hyponymy preserving. 
In addition we will require it to be  bilinear. 
If possible, it should also be non-commutative. 
Our composition rule is called Compression, \textsf{Compr}, and it is in fact an infinite set of rules; namely all completely positive maps from $\mc{M}_m$ to $\mc{M}_m\otimes \mc{M}_m$, where $\mc{M}_m$ denotes the set of real matrices of size $m\times m$.
As a special case, we recover \textsf{Mult}.

We use the following notation. 
$A^*$ denotes complex conjugate transpose, and $A_*$ means complex conjugate. 
$\PSD_m$ denotes the set of positive semidefinite (psd) matrices of size $m\times m$ over the real numbers, and a psd element is denoted by  $\geqslant 0$. 
We use the term \emph{functional words} for words such as verbs and adjectives that take arguments. Nouns are not functional words.

\section{Representing words as positive semidefinite matrices}

We assume that the reader is familiar with the categorical compositional distributional model of meaning introduced in \cite{Co10c}, Frobenius algebras as used in \cite{sadrzadehrelpron}, and the $\mathbf{CPM}$ construction \cite{selinger2007} -- we have summarised the most important ingredients in Appendix \ref{app}. We now jump right in to the representation of words as psd matrices and possible composition rules.

Positive semidefinite matrices are represented in $\CPM{\FHilb}$ as morphisms $\mathbb{R}\to M\otimes M^*$, where $M$ is some finite-dimensional Hilbert space and $M^*$ is its dual.
 The functor $\cpmsem: \mathbf{Preg} \rightarrow \CPM\FHilb$ sends nouns and sentences to psd matrices, and adjectives, verbs, and other functional words to completely positive maps, or equivalently psd matrices in a larger space. 
We represent words as psd matrices in the following way. In the vector-based model of meaning, a word $w$ is represented by a column vector, $\ket{w} \in \mathbb{R}^m$ (for some $m$). To pass to psd matrices, a subset of words $S$ will be mapped to rank 1 matrices, i.e. $\ket{w}\mapsto \ket{w}\bra{w}$. 
The words in $S$ are the hyponyms. 
The other words, which are  hypernyms of the words in  $S$, will be represented as mixtures of hyponyms:
\begin{equation}
\label{eq:psdword}
\rho =\sum_{w\in W \subset S} \ket{w}\bra{w}.
\end{equation}
Within a compositional model of meaning, we  view nouns as psd matrices in $\mc{M}_m$, and sentences as  psd matrices in $\mc{M}_s$ (for some $m$ and $s$). 
An intransitive verb has type $n^r s$ in the pregroup grammar, and is mapped by $\cpmsem$ to a psd element in $\mc{M}_m \otimes \mc{M}_s$.
Equivalently, an intransitive verb  is a completely positive (CP) map $\mc{M}_m \rightarrow \mc{M}_s$. 
The method for building psd matrices summarised in equation \eqref{eq:psdword} maps words of all grammatical types to psd matrices in $\mc{M}_m$. This is the correct type for nouns, but wrong for other grammatical types. Taking the example of intransitive verbs, we need to find a mechanism to lift an intransitive verb as a psd element in  $\mc{M}_m$ to a CP map $\mc{M}_m \rightarrow \mc{M}_s$. There have been various approaches to implemnting this type lifting, which we now summarise.


Suppose $n$ is a psd matrix for a noun, and $v$ a psd matrix for a verb. 
Proposals in \cite{lewisranlp, mots, Co20} include the following --
note in particular that \textsf{Fuzz} and \textsf{Phaser} defined in \cite{Co20} coincide with \textsf{KMult} and \textsf{BMult} defined in \cite{lewisranlp}:  
\begin{itemize}
\item 
$\textsf{Mult}(n,v) = n\odot v$ where $\odot $ is the Hadamard product, i.e.\ $(n\odot v)_{i,j} = n_{ij} v_{ij}$. 
\item 
$\textsf{Fuzz}(n,v) = \textsf{KMult} (n,v)=  \sum_ip_i P_i nP_i $ where $v=\sum_i p_i P_i$ is the spectral decomposition of $v$. That is,
$$
\textsf{Fuzz}(n,v) = \sum_{i} \sqrt{p_i} P_i n P_i \sqrt{p_i}
$$
\item 
$\textsf{Phaser}(n,v) = \textsf{BMult}(n,v)=  v^{1/2} nv^{1/2}$. 
Let $v=\sum_i p_i P_i$  be the spectral decomposition of $v$. Then
$$
\textsf{Phaser}(n,v) = \sum_{i,j} \sqrt{p_i} P_i n P_j \sqrt{p_j}
$$

\end{itemize}

Some benefits and drawbacks of these operations are as follows. \textsf{Mult} is a straightforward use of Frobenius algebra in the category $\CPM{\FHilb}$. 
It is linear, completely positive and preserves hyponymy. 
However, linguistically it is unsatisfactory because it is commutative, and so will map `Howard likes Jimmy' to the same psd matrix as `Jimmy likes Howard' -- which do not have the same meaning and so should not have the same matrix representation. On the other hand, both \textsf{Phaser} and \textsf{Fuzz} are non-commutative, however they are not linear  and do not preserve hyponymy. 
%

In the next section we outline the properties we want from a composition method, and propose a general framework that will allow us to generate a number of suggestions.

\section{In search of more guitar pedals: Compression}
\label{sec:composingpsd}

For the rest of the paper, we assume that both nouns and verbs are represented by psd matrices of the same size $m$, that is, we let  $n \in \PSD_m$ and $v\in \PSD_m$. 
We are looking for a  composition rule for these psd matrices. 
We  call the desired operation $\textsf{Compr}$ (for reasons we shall later see), and  want it to be a map
$$
\textsf{Compr}: \M_m \times  \M_m \to  \M_m. 
$$
The minimal two properties required from this map are the following: 
\begin{enumerate}
\item[(i)] {\em Positivity preserving}

If $n,v$ are psd, then $\textsf{Compr}( n,v)$ is psd: 
$$
\textsf{Compr}: \PSD_m \times \PSD_m \to \PSD_m 
$$

\item[(ii)] {\em Hyponymy preserving}

If hyponymy is represented by the L\"owner order $\leqslant$,\footnote{If $\rho$, $\sigma$ are psd, then $\rho\leqslant\sigma$ iff $\sigma -\rho\geqslant 0$, i.e.\ if $\sigma - \rho$ is itself psd.}
$$
n_1\leqslant n_2 , \quad v_1\leqslant v_2 \implies 
\textsf{Compr}(n_1,v_1)\leqslant \textsf{Compr}(n_2,v_2)
$$
\end{enumerate}

Although these two properties are the most important ones, we now  consider  another property: 

\begin{enumerate}
\item[(iii)] {\em Bilinearity}

\textsf{Compr} is linear in each of its arguments, namely for $\alpha\in \mathbb{R}$:

$\textsf{Compr}(\alpha n,v) = \alpha \textsf{Compr}(n,v)$ 

$\textsf{Compr}(n,\alpha v) = \alpha \textsf{Compr}(n,v)$ 

$\textsf{Compr}(n+n',v) =  \textsf{Compr}(n,v)+  \textsf{Compr}(n',v)$ 

$\textsf{Compr}(n,v+v') =  \textsf{Compr}(n,v)+  \textsf{Compr}(n,v')$ 
\end{enumerate}

Assumption (iii) has two advantages. 
The first one is that  if the map is positivity preserving on the Cartesian product [(i)] and bilinear [(iii)], then it is hyponymy preserving [(ii)]: 

\begin{lemma}
Assumptions (i) and (iii) imply (ii). 
\end{lemma}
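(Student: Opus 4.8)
The plan is to prove (ii) by reducing the difference of the two composed matrices to a sum of two terms, each of which is manifestly positive semidefinite by (i). The key observation is that bilinearity lets us interpolate between the pair $(n_1,v_1)$ and the pair $(n_2,v_2)$ by changing one argument at a time, turning the monotonicity question into two separate positivity questions.

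First I would write the telescoping decomposition
$$
\textsf{Compr}(n_2,v_2) - \textsf{Compr}(n_1,v_1) = \left[\textsf{Compr}(n_2,v_2) - \textsf{Compr}(n_1,v_2)\right] + \left[\textsf{Compr}(n_1,v_2) - \textsf{Compr}(n_1,v_1)\right].
$$
Applying linearity in the first argument to the first bracket and linearity in the second argument to the second bracket (assumption (iii)), this collapses to
$$
\textsf{Compr}(n_2,v_2) - \textsf{Compr}(n_1,v_1) = \textsf{Compr}(n_2-n_1,\, v_2) + \textsf{Compr}(n_1,\, v_2-v_1).
$$

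Next I would verify that every argument fed into $\textsf{Compr}$ on the right-hand side is psd, so that (i) applies. Since $n_1 \leqslant n_2$ and $v_1 \leqslant v_2$ in the L\"owner order, the differences $n_2-n_1$ and $v_2-v_1$ are psd by definition; moreover $n_1$ and $v_2$ are psd as word matrices. Hence both pairs $(n_2-n_1,\,v_2)$ and $(n_1,\,v_2-v_1)$ lie in $\PSD_m \times \PSD_m$, and assumption (i) gives $\textsf{Compr}(n_2-n_1,v_2)\geqslant 0$ and $\textsf{Compr}(n_1,v_2-v_1)\geqslant 0$. Since the sum of two psd matrices is psd, the left-hand side is psd, which is precisely the statement $\textsf{Compr}(n_1,v_1)\leqslant \textsf{Compr}(n_2,v_2)$, establishing (ii).

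I do not anticipate a serious obstacle: this is the standard \emph{monotonicity-from-positivity-plus-linearity} argument. The only point requiring care is ensuring all four slots remain in $\PSD_m$ so that (i) is genuinely applicable — in particular noting that the first term uses $v_2 \geqslant 0$ (not merely $v_2-v_1\geqslant 0$) and the second uses $n_1\geqslant 0$. A symmetric decomposition routed through the intermediate pair $(n_2,v_1)$ instead of $(n_1,v_2)$ would work equally well, which is a useful sanity check that the conclusion does not depend on the interpolation order.
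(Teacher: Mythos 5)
Your proof is correct and is essentially the paper's argument: both pass through the intermediate pair $(n_1,v_2)$, use bilinearity to rewrite the two differences as $\textsf{Compr}(n_2-n_1,v_2)$ and $\textsf{Compr}(n_1,v_2-v_1)$, and invoke positivity preservation on each. The paper merely phrases the final step as transitivity of the L\"owner order rather than as summing two psd terms, which is the same thing.
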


\begin{proof}
Assume that $n_2\geqslant n_1$ and $v_2\geqslant 0$. Using (i) we have that $\textsf{Compr}(n_2-n_1, v_2) \geqslant 0$, and using (iii) that 
$\textsf{Compr}(n_2, v_2) \geqslant \textsf{Compr}(n_1, v_2).$ 
Now assume that $v_2\geqslant v_1$ and $n_1\geqslant 0$. Following the same argument we obtain that 
$\textsf{Compr}(n_1, v_2)\geqslant \textsf{Compr}(n_1, v_1). $
By transitivity of being psd, we obtain that
$\textsf{Compr}(n_2, v_2)\geqslant \textsf{Compr}(n_1, v_1) $, 
which is condition (ii).
\end{proof}

The second advantage of bilinearity is that it allows to reformulate \textsf{Compr} in a convenient way.
Since \textsf{Compr} is linear in both  components, we construct the following linear map:
\begin{align*}
\M_m&\to \textrm{Lin}(\M_m,\M_m)  \\ v &\mapsto  \left(\textsf{Compr}(\cdot, v)\colon n\mapsto \textsf{Compr}(n, v)\right) 
\end{align*} 
Note that linearity of  \textsf{Compr} in the noun component is necessary for the image of this map to be $\textrm{Lin}(\M_m,\M_m)$, whereas linearity  in the verb component is necessary for this map itself to be linear.
By slight abuse of notation we  denote this new map  also by \textsf{Compr}:
$$
\textsf{Compr} \colon \M_m\to  \textrm{Lin}(\M_m,\M_m).
$$ 
Now, assumption  (i) applied to this new map means that psd matrices are mapped to positivity preserving maps,  
$$
\textsf{Compr}: \PSD_m\to \textrm{PP}(\M_m,\M_m) ,
$$
where $\textrm{PP}(\M_m,\M_m)$ is the set of positivity preserving linear maps from $\M_m$ to $\M_m$, i.e.\ those that map psd matrices to psd matrices.  
To make things more tractable, one can use the isomorphism 
\begin{align*} 
\textrm{Lin}(\M_m,\M_m)&\to \M_m\otimes\M_m\\ 
\varphi &\mapsto  \sum_{i,j} \varphi(|e_i\rangle\langle e_j|)\otimes |e_i\rangle\langle e_j|,
\end{align*}
where $\{\ket{e_i}\}$ is an orthonormal basis of $\mathbb{R}^m$. 
Using this isomorphism,  $\textrm{PP}(\M_m,\M_m)$ corresponds to the set of block positive matrices  $\textrm{BP}(\M_m \otimes \M_m)$ on the tensor product space.\footnote{A matrix $\rho\in \M_m\otimes \M_m$ is block positive if $(\langle v|\otimes \langle w|) \rho (|v\rangle\otimes |w\rangle) \geq 0$ for all vectors $|v\rangle$, $|w\rangle$.}
Summarizing, we are trying to construct a linear map 
$$
\textsf{Compr} \colon \M_m\to \M_m\otimes \M_m
$$ 
that maps psd matrices to block positive matrices. So far, this is just a reformulation of conditions (i) and (iii).

 To make thinks more tractable, we now further strengthen the conditions on the map. Namely, we require  \textsf{Compr} to map psd matrices  in $\M_m$ to psd matrices in $\M_m\otimes\M_m\cong \M_{m^2}$, i.e.\ to be positivity preserving itself. 
Using the isomorphism above, this means that $\textsf{Compr}$ maps psd matrices to completely positive (CP) maps from $\M_m$ to $\M_m$  (see table \ref{tab:posmaps}): 
 $$
\textsf{Compr}: \PSD_m\to \textrm{CP}(\M_m,\M_m) . 
$$

\begin{table}[h]\centering
\begin{tabular}{rcl}
Linear map $ \M_m \to \M_m$  & $\leftrightarrow$ & Element in $\M_m\otimes \M_m$ \\
\hline 
Positivity preserving map  & $\leftrightarrow$ & Block positive matrix  \\
Completely positive map  & $\leftrightarrow$ & Positive semidefinite matrix \\
\end{tabular}
\caption{Correspondence between linear maps $ \M_m \to \M_m$  and elements in $\M_m\otimes \M_m$, known as the Choi-Jamio\l{}kowski isomorphism.}
\label{tab:posmaps}
\end{table}
 
 And since we are still not running out of steam, we  require \textsf{Compr} not only to be positivity preserving, but also to be completely positive itself. In total, we are trying to construct a {\it completely positive map} 
\begin{align}
\nonumber
\textsf{Compr}\colon \M_m &\to \M_m \otimes \M_m\cong \M_{m^2} \\ 
 v&\mapsto \sum_l K_l v K_l^*
 \label{eq:vkraus}
\end{align}
 where we have used the well known fact that every completely positive map admits a Kraus decomposition, for certain 
Kraus operators $K_l\in \mathbb R^m\otimes\M_m\cong\M_{m^2,m}$. 
Recall that ${K}^*$ denotes the complex conjugate transpose of $K$. 

In summary,  we are asking for a stronger condition than just (i) and (iii). On the other hand, 
the weaker forms of maps mentioned above do not admit a closed description, whereas completely positive maps do. 
By Stinespring's Dilation Theorem, all completely positive map can be expressed as a $*$-representation followed by a  {\it compression}, hence the name \textsf{Compr}. This is precisely what gives rise to the Kraus decomposition of the map.   This also fits well with the  electric guitar pedal notation from \cite{Co20}, as can be seen in figure \ref{Fig:Comp}.

\begin{figure}[h]
\centering
\begin{tikzpicture}
	\begin{pgfonlayer}{nodelayer}
		\node [style=none, align=right] (1) at (3.5, 1) {$\:$ Meaning of a noun $n$};
		\node [style=none] (2) at (3.5, -.5) {$\:$ Meaning of a verb $v$};
		\node [style=none] (3) at (1, 0.5) {};
		\node [style=none] (5) at (-1, .5) {};
		\node [style=none] (6) at (-3, .5) { $\textsf{Compr}(n,v)\:$};
		\node [style=none] (7) at (0,0)
{\includegraphics[width=.15\textwidth]{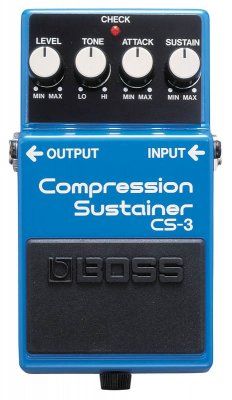}};
	\end{pgfonlayer}
	\begin{pgfonlayer}{edgelayer}
		\draw [very thick, bend right=-10] (1.west) to (3.north);
		\draw [very thick, bend right=10] (2.west) to (3.north);
		\draw [very thick] (5.center) to (6.east);
	\end{pgfonlayer}
\end{tikzpicture}
\caption{Not only Fuzz and Phaser, but also Compression is an important guitar pedal. The operation \textsf{Compr} takes as input an element of $\M_m \times \M_m$ (denoted $n, v$) representing the meaning of a noun and the meaning of a verb, and it outputs $\textsf{Compr}(n,v)$, representing the meaning of the sentence $n \: v$. }
\label{Fig:Comp}
\end{figure}

Note that in general such \textsf{Compr} is non-commutative, i.e.\ when translated back to the initial setup of  $$\textsf{Compr}: \M_m \times  \M_m \to  \M_m $$ we will generally have $\textsf{Compr}( n,v) \neq \textsf{Compr}( v ,n) $. This is a good property, as it reflects the position of the words in a sentence has both syntactic and semantic roles,  e.g.\ `woman bites dog' versus `dog bites woman'. 
Note also that  \textsf{Fuzz} and \textsf{Phaser} (or \textsf{KMult} and \textsf{BMult}) are not linear in the verb component, i.e.\ they do not fulfill (iii), and are thus  not special cases of \textsf{Compr}.  However, \textsf{Mult} is a special case of \textsf{Compr}, as we shall see. 

\section{Building nouns and verbs in $\CPM{\FHilb}$} 
In order to build psd matrices to represent words, we can use pretrained word embeddings such as word2vec \cite{word2vec} or GloVe \cite{glove}, together with information about hyponymy relations. The word embedding methods produce vectors for each word, all represented in one vector space $W$. Information about hyponymy relations can be found in WordNet \cite{wordnet} or in a less supervised manner by extracting hyponym-hypernym pairs using Hearst patterns \cite{hearst1992, hearst}. 

Given a word $w$, we can gather a set of hyponyms  $\{h_i\}_i$ from WordNet, Hearst patterns, or some other source. We then take the vectors for the $h_i$ from pretrained word embeddings, and form the matrix
\[
\sem{w} =  \sum_i \ket{h_i}\bra{h_i} \in W \otimes W^*.
\]
where $W^*$ is the dual of the column vector space $W$. $\sem{w}$ is then normalised. In this work, we normalise using the infinity norm,  that is, we divide by the maximum eigenvalue. This has been shown to have  nice properties \cite{vdwetering}. 
%
%

This approach to building word representations puts all word representations in the shared space $W \otimes W^*$. If we are working in the category $\CPM{\FHilb}$, this is the right kind of representation for nouns,  but not for functional words. 
%
To transform a psd matrix $\sem{verb} \in \PSD_m$ to a psd matrix in $\mc{M}_{m^2}$, we use the composition rule  \textsf{Compr} proposed in section \ref{sec:composingpsd}:
\begin{align*} 
\textsf{Compr}\colon \M_m&\to \M_m\otimes\M_m\cong\M_{m^2} \\
v&\mapsto \textsf{Compr}(\cdot,v) 
\end{align*}

\subsection{Characterising $\mathsf{Compr}$ diagrammatically}

We now characterise $\mathsf{Compr}$, $\mathsf{Compr}(\cdot,v)$, and $\mathsf{Compr}(n,v)$ in the diagrammatic calculus for $\FHilb$. This will allow us to generate simple examples of $\mathsf{Compr}$ in a systematic manner. 
%
%
Equation \eqref{eq:vkraus} states:
\[
\mathsf{Compr}(\cdot,v) = \sum_l K_l v K_l^*
\] 
Diagrammatically, this gives us:

\[
\mathsf{Compr} = \begin{gathered} \input{tikz/T_mixed.tikz}\end{gathered}, \qquad \mathsf{Compr}(\cdot,v) =  \begin{gathered} \input{tikz/Tv_mixed.tikz}\end{gathered}
\]
The application of $\mathsf{Compr}(\cdot,v)$ to $n$ is then:
\begin{equation}
\label{eq:tvn}
\mathsf{Compr}(n,v) = \begin{gathered} \input{tikz/Tvn.tikz}\end{gathered}
\end{equation}
Note that this style corresponds to the usual DisCoCat diagram style via some reshaping, explained in  equation \eqref{eq:tscpmc} of the appendix.
%
%
%
%
Given that we have representations of $v$ and of $n$, what should the $K_l$ look like? In full generality, parameters of the $K_l$ could be perhaps inferred using regression techniques,
 in a similar approach to that suggested in \cite{lewisrnns}, inspired by \cite{socher}, or via methods like those in \cite{baroni2010, grefenstette2013}. However, we can also consider purely ``structural'' morphisms, generated from cups, caps, swaps, and spiders (for details, see table \ref{tab:frob} in the appendix). In the following, we give a number of options  to specify $K$. We divide up the internal structure of $K$ by specifying the number of spiders inside $K$.

  \paragraph{0 spiders}
  \begin{align}
     \label{eq:trnv}
   K &= \begin{gathered}\input{tikz/Kid.tikz}\end{gathered}, \quad \mathsf{Compr}(v)(n) \:=\:  \begin{gathered}\input{tikz/Tvnidfull.tikz}\end{gathered} \:=\:  \begin{gathered}\input{tikz/Tvnid.tikz}\end{gathered} = \tr(n) v\\
   K &= \begin{gathered}\input{tikz/Kswap.tikz}\end{gathered}, \quad \mathsf{Compr}(v)(n) \:=\:  \begin{gathered}\input{tikz/Tvnswapfull.tikz}\end{gathered} \:=\:  \begin{gathered}\input{tikz/Tvnswap.tikz}\end{gathered} = \tr(nv) \mathbb{I}\\
      \label{eq:trvn}
   K &= \begin{gathered}\input{tikz/Kcupcap.tikz}\end{gathered}, \quad \mathsf{Compr}(v)(n) \:=\:  \begin{gathered}\input{tikz/Tvncupcapfull.tikz}\end{gathered} \:=\:  \begin{gathered}\input{tikz/Tvncupcap.tikz}\end{gathered} = \tr(v) n
 \end{align}
 
  \paragraph{1 spider}
  \begin{align}
         \label{eq:diagndiagv}
    K &= \begin{gathered}\input{tikz/Kall.tikz}\end{gathered}, \quad \mathsf{Compr}(v)(n) \:=\:  \begin{gathered}\input{tikz/Tvnallfull.tikz}\end{gathered} \:=\:  \begin{gathered}\input{tikz/Tvnall.tikz}\end{gathered} = \mathrm{diag}(n) \mathrm{diag}(v)
 \end{align}
 
 \paragraph{2 spiders}
  \begin{align}
    \label{eq:nsumv}
 K &= \begin{gathered}\input{tikz/KLupRup.tikz}\end{gathered}, \quad \mathsf{Compr}(v)(n) \:=\:  \begin{gathered}\input{tikz/TvLupRupfull.tikz}\end{gathered} \:=\:  \begin{gathered}\input{tikz/TvLupRup.tikz}\end{gathered} = mn\sum_{ij} v_{ij}\\
  K &= \begin{gathered}\input{tikz/KLupRdown.tikz}\end{gathered}, \quad \mathsf{Compr}(v)(n) \:=\:  \begin{gathered}\input{tikz/TvnLupRdownfull.tikz}\end{gathered} \:=\:  \begin{gathered}\input{tikz/TvnLupRdown.tikz}\end{gathered} = m\tr(nv)\sum_{ij} \ket{e_i}\bra{e_j}\\
   \label{eq:vsumn}
  K &= \begin{gathered}\input{tikz/KLupLdown.tikz}\end{gathered}, \quad \mathsf{Compr}(v)(n) \:=\:  \begin{gathered}\input{tikz/TvnLupLdownfull.tikz}\end{gathered} \:=\:  \begin{gathered}\input{tikz/TvnLupLdown.tikz}\end{gathered} = mv\sum_{ij} n_{ij}\\
  K &= \begin{gathered}\input{tikz/KRupRdown.tikz}\end{gathered}, \quad \mathsf{Compr}(v)(n) \:=\:  \begin{gathered}\input{tikz/TvnRupRdownfull.tikz}\end{gathered} \:=\:  \begin{gathered}\input{tikz/TvnRupRdown.tikz}\end{gathered} = \tr(n)\sum_{ij} v_{ij}\sum_{kl}\ket{e_k}\bra{e_l}
   \end{align}
 \begin{align}
 K &= \begin{gathered}\input{tikz/KRupLdown.tikz}\end{gathered}, \quad \mathsf{Compr}(v)(n) \:=\:  \begin{gathered}\input{tikz/TvnRupLdownfull.tikz}\end{gathered} \:=\:  \begin{gathered}\input{tikz/TvnRupLdown.tikz}\end{gathered} = \mathbb{I} \sum_{ij} n_{ij}\sum_{kl}v_{kl}\\
  K &= \begin{gathered}\input{tikz/KRdownLdown.tikz}\end{gathered}, \quad \mathsf{Compr}(v)(n) \:=\:  \begin{gathered}\input{tikz/TvnRdownLdownfull.tikz}\end{gathered} \:=\:  \begin{gathered}\input{tikz/TvnRdownLdown.tikz}\end{gathered} = \tr(v) \sum_{ij} n_{ij}\sum_{kl}\ket{e_k}\bra{e_l}\\
  K &= \begin{gathered}\input{tikz/Kmuiota.tikz}\end{gathered}, \quad \mathsf{Compr}(v)(n) \:=\:  \begin{gathered}\input{tikz/Tvnmuiotafull.tikz}\end{gathered} \:=\:  \begin{gathered}\input{tikz/Tvnmuiota.tikz}\end{gathered} = \tr\left( \sum_{ij} n_{ij}v_{ij} \ket{e_i}\bra{e_j}\right)\sum_{kl}\ket{e_k}\bra{e_l}\\
       \label{eq:sumndiagv}
  K &= \begin{gathered}\input{tikz/Kiotamu.tikz}\end{gathered}, \quad \mathsf{Compr}(v)(n) \:=\:  \begin{gathered}\input{tikz/Tvniotamufull.tikz}\end{gathered} \:=\:  \begin{gathered}\input{tikz/Tvniotamu.tikz}\end{gathered} = \sum_{ij} n_{ij}\mathrm{diag}(v)
  \end{align}
  \begin{align}
       \label{eq:sumvdiagn}
   K &= \begin{gathered}\input{tikz/KDeltazeta.tikz}\end{gathered}, \quad \mathsf{Compr}(v)(n) \:=\:  \begin{gathered}\input{tikz/TvnDeltazetafull.tikz}\end{gathered} \:=\:  \begin{gathered}\input{tikz/TvnDeltazeta.tikz}\end{gathered} = \sum_{ij} v_{ij}\mathrm{diag}(n)
 \end{align}
 \begin{align}
        \label{eq:mult}
    K &= \begin{gathered}\input{tikz/KzetaDelta.tikz}\end{gathered}, \quad \mathsf{Compr}(v)(n) \:=\:  \begin{gathered}\input{tikz/TvnzetaDeltafull.tikz}\end{gathered} \:=\:  \begin{gathered}\input{tikz/TvnzetaDelta.tikz}\end{gathered} = \sum_{ij} v_{ij}n_{ij} \ket{e_i}\bra{e_j} = \mathsf{Mult}(n, v)
 \end{align}
 
  \paragraph{3 spiders} The instances with 3 spiders are subsumed by the instances with 2 spiders, since to have 3 spiders we would need two spiders with one leg and one spider with two legs. A spider with two legs is either a cup, cap, or the identity morphism, hence these have been included in the 2 spider instances.
 
  \paragraph{4 spiders}
 
 \begin{align}
 K = \begin{gathered}\input{tikz/K4.tikz}\end{gathered}, \quad \mathsf{Compr}(v)(n) \:=\:  \begin{gathered}\input{tikz/Tvn4full.tikz}\end{gathered} \:=\:  \begin{gathered}\input{tikz/Tvn4.tikz}\end{gathered} = m\sum_{ij} n_{ij}\sum_{kl} v_{kl}\sum_{rs}\ket{e_r} \bra{e_s}
 \end{align}
 
 This gives us a whole range of possible instantiations of $\mathsf{Compr}$. Some of these options are more interesting than others. Options that give us a multiple of the identity matrix or a multiple of $\sum_{ij} \ket{e_i}\bra{e_j} $ for orthonormal basis $\{\ket{e_i}\}_i$ are less interesting since this means that all phrase representations will be mapped to the same psd matrix, differing only by a scalar. This means that although hyponymy information may be preserved, information about similarity will be lost.
 
 In the following section we test out a number of options on some phrase entailment datasets.  We test equations: \eqref{eq:trnv}, \eqref{eq:trvn}, \eqref{eq:diagndiagv}, \eqref{eq:nsumv} \eqref{eq:vsumn}, \eqref{eq:sumndiagv}, \eqref{eq:sumvdiagn}, and \eqref{eq:mult}, the last of which was already shown to work well in \cite{lewisranlp}.

\section{Demonstration}
To test these composition methods, we follow the setup in \cite{lewisranlp, lewisnegation}. We firstly build psd matrices using GloVe vectors. For this small scale demonstration we use GloVe vectors of dimension 50. We use a set of datasets that contain pairs of short phrases, for which the first either does or does not entail the second. 

In addition, we use a graded form of the L\"owner ordering to measure hyponymy, since in general the crisp L\"owner ordering will not be obtained between two psd matrices $A$ and $B$. This graded form is measured as follows. 
Given two psd matrices $A$ and $B$, if $A \leqslant B$ then $ A + D = B $ where $D$ is itself a psd matrix. If this does not hold, we can add an error term $E$ so that 
\[ 
A + D = B + E.
\]  
In the worst case, we can set $E = A$, so that $D = B$, and in fact we will always have that $E \leqslant A$. A graded measure of hyponymy is obtained by comparing the size of $E$ and $A$. We set \[k_E = 1 - \frac{||E||}{||A||},\] 
where $||\cdot||$ denotes the Euclidean norm, $||A|| =\sqrt{\tr(A^*A)}$.  
{The crisp L\"owner order is recovered in the case} that $E = 0$, so that $k_E = 1$. 
A second measure of graded hyponymy is obtained as follows: 
\begin{equation}
k_{BA} = \frac{\sum_i \lambda_i}{\sum_i |\lambda_i|}
\end{equation}
where $\lambda_i$ is the $i$th eigenvalue of $B - A$ and $| \cdot |$ indicates absolute value. This measures the proportions of positive and negative eigenvalues in the expression $B-A$. If all eigenvalues are negative, $k_{BA} = -1$, and if all are positive, $k_{BA} = 1$. This measure is balanced in the sense that $k_{BA} = - k_{AB}$.

\paragraph{Datasets}
The datasets were originally collected for \cite{kartsaklis2016}. They consist of ordered pairs of short phrases in which the first entails the second, and also the same pair in the opposite order, so that the first phrase does not entail the second. The datasets were gathered using WordNet as source. The datasets contain intransitive sentences, of the form \lang{subject verb}, verb phrases, of the form \lang{verb object} and transitive sentences, of the form \lang{subject verb object}.  For example:
\begin{quote}
summer finish, season end, \texttt{true}

season end, summer finish, \texttt{false}
\end{quote}

The datasets have a binary classification, so we measure performance using area under receiver operating characteristic (ROC) curve. If we imagine that our graded measure is converted to a binary measure by giving a threshold, area under ROC curve measures performance at all cutoff thresholds. A value of 1 means that the graded values are in fact a completely correct binary classification, a value of 0.5 means that the graded values are randomly correlated with the correct classification, and a value of 0 means that the graded values are binary values that are classified in exactly the wrong way (a value of 1 is mapped to 0 and 0 to 1).


\paragraph{Models}

We test the following models, for $n, v \in \M_m$. 
We denote by $\diag(A)$ the matrix obtained by setting all off-diagonal elements of $A$ to 0.  
In order to retain the property that the maximum eigenvalue is less than or equal to 1, we divide by the dimension $m$ or by $m^2$ where necessary.
\begin{enumerate}
\item \textbf{Traced noun}: $\textsf{Compr}(n, v) = \frac{\tr(n)}{m} v$
\item \textbf{Traced verb}: $\textsf{Compr}(n, v) = \frac{\tr(v)}{m} n$
\item \textbf{Diag}: $\textsf{Compr}(n, v) = \diag(n)\diag(v)$ 
\item \textbf{Summed noun}: $\textsf{Compr}(n, v) = \frac{v}{m^2} \sum_{ij} n_{ij}$
\item \textbf{Summed verb}: $\textsf{Compr}(n, v) = \frac{n}{m^2} \sum_{ij} v_{ij}$
\item \textbf{Diag verb}: $\textsf{Compr}(n, v) = \frac{\diag(v)}{m^2} \sum_{ij} n_{ij}$
\item \textbf{Diag noun}: $\textsf{Compr}(n, v) = \frac{\diag(n)}{m^2} \sum_{ij} v_{ij}$
\item \textbf{Mult}: $\textsf{Compr}(n, v) = \sum_{ij} v_{ij}n_{ij} \ket{e_i}\bra{e_j}$
\end{enumerate}
Above, we have specified models for sentences of the form \lang{subj verb}. For verb phrases, we treat the verb as $v$ and the object as $n$, so the models differ based on the grammatical type of the word, rather than its position in the argument list. For sentence of the form \lang{subject verb object}, we first combine the verb and the object, according to their grammatical type, and then treating this verb phrase as an intransitive verb, combine the subject and verb phrase, again according to grammatical type. So, for example, iterating the composition \textbf{Traced Verb} on psd matrices $s$, $v$, $o$ for subject, verb, and object, we obtain:
\[
\textsf{Compr}(s, \textsf{Compr}(o, v)) = \textsf{Compr}(s, \frac{\tr(v)}{m} o) = \frac{\tr(v)\tr(o)}{m^2} s
\]
We also test two combined models:
\begin{enumerate}
\item \textbf{Traced addition}: $\textsf{Compr}(n, v) = \frac{\tr(n)}{2m} v + \frac{\tr(v)}{2m} n$
\item \textbf{Summed addition}: $\textsf{Compr}(n, v) = \frac{v}{m^2} \sum_{ij} n_{ij} +  \frac{n}{m^2} \sum_{ij} v_{ij}$
\end{enumerate}
We compare with a verb-only baseline and with \textsf{Fuzz} and \textsf{Phaser}. These last two are tested in two directions: 
\begin{enumerate}
\item \textbf{Verb only}: $\textsf{Verb only}(n, v) = v$ 
\item \textbf{Fuzz}: $\textsf{Fuzz}(n, v) = \sum_{i} \sqrt{p_i} P_i n P_i \sqrt{p_i}$ where $\sum_i p_i P_i$ is the spectral decomposition of $v$
\item \textbf{Fuzz switched}: $\textsf{Fuzz-s}(n, v) = \sum_{i} \sqrt{q_i} Q_i v Q_i \sqrt{q_i}$ where $\sum_i q_i Q_i$ is the spectral decomposition of $n$ 
\item \textbf{Phaser}: $\textsf{Phaser}(n, v) = \sqrt{v} n \sqrt{v}$ 
\item \textbf{Phaser switched}: $\textsf{Phaser-s}(n, v) = \sqrt{n} v \sqrt{n}$
\end{enumerate}

To test for significance of our results, we bootstrap the data with 100 repetitions \cite{efron1992} and compare between models using a two sample t-test. We apply the Bonferroni correction to compensate for multiple comparisons.

\subsection{Results}
Results are presented in table \ref{tab:results}. A key point is that as in previous work, the $k_{BA}$ measure performs better than the $k_E$ measure. Furthermore, across all datasets, the models \textbf{Traced verb} and \textbf{Summed verb} perform much more highly than simply taking the verb on its own, indicating that information about at least the size of the noun is crucial. Across both measures, performance is highest on the SVO dataset and lowest on the VO dataset. This may be due to the construction of the datasets, or it may be due to these composition methods working well on longer phrases.

 Within the results using the $k_E$ measure, the model \textbf{Diag verb} is strong across all datasets. This is surprising, as taking the diagonal of the verb would seem to result in information loss. Within the results using the $k_E$ measure, the new models largely outperform \textbf{Phaser}, and on the VO dataset, outperform \textbf{Fuzz} too. 
 
Within the results using the $k_{BA}$ measure, the picture is less clear. \textbf{Diag}, \textbf{Mult}, and \textbf{Traced addition} are all fairly strong, but there is no outright best model. Perhaps looking at some other combination possibilities would be useful.

\begin{table}[htbp]
\begin{center}
\begin{tabular}{l | l l l | l l l }
& \multicolumn{3}{|c|}{$k_E$ measure} & \multicolumn{3}{|c}{$k_{BA}$ measure}\\
& SV & VO & SVO& SV & VO & SVO\\
\textbf{Verb only} 		& 0.599 		& 0.586 		& 0.652 		& 0.787 		& 0.744 		& 0.834 \\\hline
\textbf{Fuzz}			& 0.867 		& 0.803 		& 0.917 		& 0.927 		& 0.896 		& 0.968\\
\textbf{Fuzz switched}	& 0.809 		& 0.743 		& 0.940 		& 0.934 		& 0.891 		& 0.953\\
\textbf{Phaser} 			& 0.833 		& 0.761 		& 0.925 		& 0.924 		& 0.896 		& 0.970 \\
\textbf{Phaser switched} 	& 0.765 		& 0.717 		& 0.932 		& 0.930 		& 0.891 		& 0.977\\\hline
\textbf{Traced noun}		& 0.813 		& 0.769 		& 0.933 		& 0.936		& $0.912^{*+}$ 	& 0.960\\
\textbf{Traced verb}		& 0.842 		& $0.803^+$ 		& $0.949^{*+}$ 		& 0.930 		& $0.909^*$ 		& $0.974^*$\\
\textbf{Diag}			& $0.898^{*+}$ 		& $0.860^{*+}$ 		& $0.943^+$ 		& $0.937^+$ 	&$\textbf{0.916}^{*+}$ 	& 0.967\\
\textbf{Summed noun}	& 0.794 		& $0.779^+$ 		& 0.898 		& 0.890 		& 0.886 		& 0.933\\
\textbf{Summed verb}	& $0.865^+$ 		& $0.810^+$ 		& 0.936 		& 0.926 		& 0.884 		& 0.970 \\
\textbf{Diag verb}		&$\textbf{0.916}^{*+}$ 	&$\textbf{0.876}^{*+}$	&$\textbf{0.977}^{*+}$	& 0.917 		& 0.875 		& 0.971\\
\textbf{Diag noun}		& $0.872^+$ 		& $0.852^{*+}$ 		& 0.432 		& 0.881 		& 0.875 		& 0.870 \\
\textbf{Mult}			& $0.850^+$ 		& $0.813^{*+}$ 		& $0.941^+$ 		&$\textbf{0.943}^{*+}$ 	& $0.915^{*+}$ 		& 0.969 \\
\textbf{Traced addition} 	& $0.868^+$ 		& $0.830^{*+}$ 		& $0.964^{*+}$ 		& 0.934 		& $0.909^{*+}$ 		&$ \textbf{0.985}^{*+}$\\
\textbf{Summed addition} 	& $0.854^+$ 		& $0.821^{*+}$ 		& 0.937 		& 0.917 		& 0.896 		& 0.966\\
\end{tabular}
\end{center}
\caption{Area under ROC curve for $k_E$ and $k_{BA}$ graded hyponymy measures. Figures are mean values of 100 samples taken from each dataset with replacement. $-^*$ indicates significantly better than both variants of \textbf{Fuzz}, $p < 0.01$, $-^+$ indicates significantly better than both variants of \textbf{Phaser}, $p < 0.01$.}
\label{tab:results}
\end{table}%

\section{Discussion}

We have presented a general composition rule called $\textsf{Compr}$ for converting a psd matrix for a functional word such as a verb or an adjective into a CP map that matches the grammatical type of the word. 
$\textsf{Compr}$ preserves hyponymy, in contrast to previous approaches like  $\textsf{Fuzz}$ and $\textsf{Phaser}$. While in full generality  we would want to learn the parameters of $\textsf{Compr}$ from a text corpus, as a first step we have defined the structure of $\textsf{Compr}$ using just cups, caps, and spiders. Results on the text datasets are promising, although there is no completely clear advantage over  $\textsf{Fuzz}$ or $\textsf{Phaser}$.

The approach we have taken, namely that of defining a map that converts representations of functional words to a higher-order type, has also been seen in vector-based models of meaning. 
In \cite{kartsaklis2012}, \cite{grefenstette2011}, word vectors and also matrices are converted using Frobenius algebras, of which the composition \textsf{Mult} is a direct analogue. 
Furthermore, in \cite{mitchell2010}, and recapitulated in \cite{lewisrnns}, a bilinear map $C: N \otimes N \rightarrow N$  that gives the composition of two vectors is proposed. Under this approach, we would have
\[
\sem{\lang{subj verb}} = C(\sem{\lang{subj}} \otimes \sem{\lang{verb}})
\text{ and }
\sem{\lang{subj verb obj}} = C(\sem{\lang{subj}} \otimes C(\sem{\lang{verb}} \otimes \sem{\lang{obj}}))
\]
Our approach is an analogue to this one within the realm of psd matrices and CP maps.

There are a number of strands to this work to be continued. We would like to learn $\textsf{Compr}$ directly from text, rather than specifying the structure by hand. More freedom in the parameters of  $\textsf{Compr}$ means that we could define it as a CP map 
\[
\textsf{Compr}\colon \M_m\to \M_m\otimes\M_s.
\]
where we are then matching the grammatical types more exactly.
In addition, the psd matrices we use are built using human curated resources -- ideally these would be learnt in a less supervised manner directly from text corpora.
At present, we have given two possible graded measures of hyponymy -- more research into these measures is needed, including how they interact with the composition methods we have specified.
 Work is currently ongoing to develop a model of negation within this framework \cite{lewisnegation}.

\bibliographystyle{plain}
\bibliography{biblio/all-my-bibliography.bib}

\appendix{
\section{Categorical compositional distributional semantics}
\label{app}

In this appendix we give a brief introduction to the categorical compositional approach to distributional semantics -- for details see \cite{Co10c, kartsaklis2012}. 

\subsection{Compositional distributional semantics}
 
 We start by reviewing some of the category theory used in categorical compositional models of meaning.


\begin{dfn}
A \define{monoidal category} is a tuple $(\mathbf{C}, \otimes, I)$ where
\begin{itemize}
\item $\mathbf{C}$ is a category, meaning that:
\begin{itemize} 
\item $\mathbf{C}$ has a collection of \define{objects} $A, B, ...$ and each ordered pair of objects $(A, B)$ has a collection of \define{morphisms} $f: A \rightarrow B$
\item For each triple of objects $(A,B, C)$ and morphisms $f: A \rightarrow B$, $g: B \rightarrow C$ there is a \define{sequential composite} $g \circ f: A \rightarrow C$ that is associative, i.e. 
\[
h \circ(g\circ f) = (h \circ g) \circ f
\]
\item for each object $A$ there is an \define{identity morphism} $1_A: A \rightarrow A$ such that  for $f: A \rightarrow B$

\[
f \circ 1_A = f \quad and \quad 1_B \circ f = f 
\]
\end{itemize}
\item for each ordered pair of objects $(A, B)$, there is a \define{composite object} $A \otimes B$, and we moreover require that:
\[
A \otimes(B \otimes C) \cong (A\otimes B) \otimes C
\]
where $\cong$ means `is isomorphic to'.
\item there is a \define{unit object} $I$, which satisfies
\[
I \otimes A \cong A \cong A \otimes I
\]
\item for each ordered pair of morphisms $f: A \rightarrow B$, $g: B \rightarrow C$ there is a \define{parallel composite} $f \otimes g : A \otimes B \rightarrow C \otimes D$ which satisfies:
$$
(g_1 \otimes g_2) \circ (f_1 \otimes f_2) = (g_1 \circ f_1) \otimes (g_2 \circ f_2) 
$$
\end{itemize}
\end{dfn}

For a precise statement and discussion of the above definition, we direct the reader to~\cite{MacLane1971}, and \cite{CoeckePaquette2011} for a more gentle introduction.  Monoidal categories can be given a graphical calculus as in figure \ref{fig:mongraph}.

\begin{figure}[h!]
\centering
\input{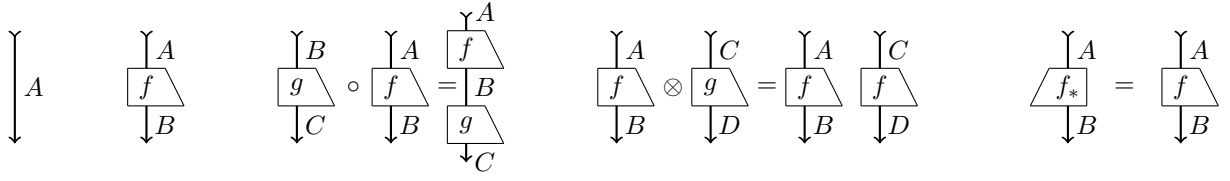}
\caption{Monoidal graphical calculus.}
\label{fig:mongraph}
\end{figure}
By convention the wire for the monoidal unit is omitted.
As will be seen in section \ref{sec:preg}, we will require that the grammar category and the meaning category are both a particular kind of monoidal category, called \emph{compact closed}. 

\begin{dfn}
A monoidal category $(C,\otimes, I)$ is \define{compact closed} if for each object $A\in C$ there are objects $A^l,A^r\in C$ (the \define{left} and \define{right duals} of $A$) and morphisms
\begin{align*}
\eta_A^l : I\rightarrow A\otimes A^l, \quad \eta_A^r : I\rightarrow A^r\otimes A, \quad \epsilon_A^l : A^l\otimes A \rightarrow I, \quad \epsilon_A^r : A\otimes A^r\rightarrow I
\end{align*}
satisfying the snake equations
\begin{align*}
&(1_A \otimes \epsilon^l_A )\circ (\eta^l_A \otimes 1_A) = 1_A &\:& (\epsilon^r_A \otimes 1_A) \circ (1_A \otimes \eta^r_A ) = 1_A
\\
&(\epsilon^l_A \otimes 1_{A^l})\circ (1_{A^l} \otimes \eta^l_A)= 1_{A^l} &\:& (1_{A^r} \otimes \epsilon^r_A ) \circ (\eta^r_A \otimes 1_{A^r} ) = 1_{A^r}
\end{align*}
\end{dfn}

The $\epsilon$ and $\eta$ maps are called \define{cups} and \define{caps} respectively, and can also be depicted in the graphical calculus as in figure \ref{fig:comgraph}. The snake equations are depicted graphically as in figure \ref{fig:snake}.

\begin{figure}[h!]
\centering
\input{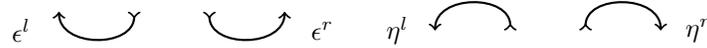}
\caption{Compact structure graphically.}
\label{fig:comgraph}
\end{figure}

\begin{figure}[h!]
\centering
\input{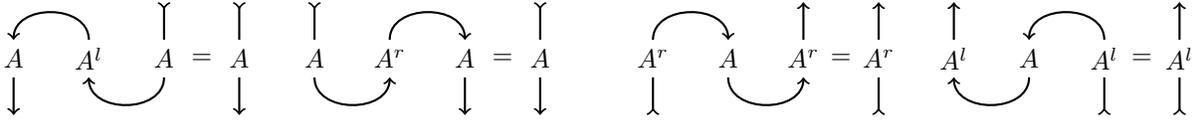}
\caption{The snake equations.}
\label{fig:snake}
\end{figure}

%

Finally, we introduce the notion of a Frobenius algebra over a real finite-dimensional Hilbert space. For a mathematically rigorous presentation see~\cite{sadrzadeh2013}.
A real Hilbert space with a fixed orthonormal basis~$\{\ket{v_i}\}_i$ has a Frobenius algebra given by:
\[
\Delta: V \rightarrow V \otimes V ::\ket{v_i} \mapsto \ket{v_i} \otimes \ket{v_i} \quad \iota : V \rightarrow \mathbb{R} :: \ket{v_i} \mapsto 1 
\]
\[
\mu : V \otimes V \rightarrow V :: \ket{v_i} \otimes \ket{v_j}
\mapsto \delta_{ij} \ket{v_i} \quad \zeta : \mathbb{R} \rightarrow V :: 1 \mapsto \sum_i \ket{v_i}
\]
This algebra is commutative, so for the swap map~$\sigma: X \otimes Y \rightarrow Y\otimes X$, we have~$\sigma \circ \Delta = \Delta$ and~$\mu \circ \sigma = \mu$.
It is also special so that~$\mu \circ \Delta = 1$. Essentially, the~$\mu$ morphism amounts to taking the diagonal of a matrix,
and~$\Delta$ to embedding a vector within a diagonal matrix.

Diagrammatically, these are represented as follows:

\begin{equation*}
\Delta = 
\begin{gathered}
\input{tikz/Delta.tikz}
\end{gathered},\qquad
\iota = 
\begin{gathered}
\begin{tikzpicture}[scale=0.5]
	\begin{pgfonlayer}{nodelayer}
		\node [style=small circ] (0) at (0, -0.5) {};
		\node [style=none] (1) at (0, 1) {$V$};
		\node [style=none] (2) at (0, 0.5) {};
	\end{pgfonlayer}
	\begin{pgfonlayer}{edgelayer}
		\draw (2.center) to (0);
	\end{pgfonlayer}
\end{tikzpicture}
\end{gathered}, \qquad
\mu = 
\begin{gathered}
\input{tikz/mu.tikz}
\end{gathered},\qquad
\zeta = 
\begin{gathered}
\begin{tikzpicture}[scale=0.5]
	\begin{pgfonlayer}{nodelayer}
		\node [style=small circ] (0) at (0, 0.5) {};
		\node [style=none] (1) at (0, -1) {$V$};
		\node [style=none] (2) at (0, -0.5) {};
	\end{pgfonlayer}
	\begin{pgfonlayer}{edgelayer}
		\draw (2.center) to (0);
	\end{pgfonlayer}
\end{tikzpicture}
\end{gathered}
\end{equation*}

Frobenius structures can have any number of wires in and out, and moreover, fuse together, so that the only thing that matters about these morphisms are how many wires in and out they have.
%

\subsection{Meaning: from vectors to positive semidefinite matrices}
\label{sec:cpm}

We wish to model words as positive semidefinite matrices. For this to work within the categorical compositional approach, we require that psd matrices have a home within a compact closed category.
We can provide this home by using the $\mathbf{CPM}$ construction \cite{selinger2007}, applied to $\FHilb$. Throughout this section~$\mathcal{C}$ denotes an arbitrary $\dag$-compact closed category. 
\begin{dfn}[Completely positive morphism \cite{selinger2007}]
  A $\mathcal{C}$-morphism~$\varphi: A^* \otimes A \rightarrow B^* \otimes B$ is said to be \emph{completely positive}~if there exists~$C \in \mathsf{Ob}(\mathcal{C})$
  and~$k \in \mathcal{C}(C\otimes A, B)$, such that~$\varphi$ can be written in the form:
\[
 (k_* \otimes k) \circ (1_{A^*} \otimes \eta_C \otimes 1_A)
\] 
where in $\FHilb$, $k_*$ is the complex conjugate of $k$.
\end{dfn}
 Identity morphisms are completely positive, and completely positive morphisms are closed under composition in~$\mathcal{C}$, leading to the following:
\begin{dfn}
  If~$\mathcal{C}$ is a $\dag$-compact closed category then~$\CPMC$ is a category with the same objects as~$\mathcal{C}$ and its morphisms are the completely positive morphisms.
\end{dfn}
Note that morphisms are defined for objects of the form $A^*\otimes A$, where $A$ is an object in $\mathcal{C}$.
The $\dagger$-compact structure required for interpreting language in our setting lifts to~$\CPM{\mathcal{C}}$:
\begin{theorem}(\cite{selinger2007})
  $\CPM{\mathcal{C}}$ is also a $\dagger$-compact closed category.
  There is a functor:
  \begin{align*}
    \cpmpure : \mathcal{C} &\rightarrow \CPM{\mathcal{C}}\\
    k &\mapsto k_* \otimes  k
  \end{align*}
  where $k_*$ denotes the complex conjugate of $k$. This functor preserves the $\dagger$-compact closed structure, and is faithful ``up to a global phase''.   
\end{theorem}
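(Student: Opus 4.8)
The plan is to work throughout in the graphical calculus of $\dagger$-compact closed categories, where a completely positive morphism $\varphi : A^* \otimes A \to B^* \otimes B$ is drawn as a ``doubled'' diagram: two mirror-image copies of a single morphism $k : C \otimes A \to B$ (one of them the conjugate $k_*$) joined along the ancilla $C$ by a cap $\eta_C$. The first task is to confirm that $\CPM{\mathcal{C}}$ is a genuine category, which amounts to checking that the completely positive morphisms contain the identities and are closed under composition. Given $\varphi$ with dilation $(C, k)$ and $\psi : B^* \otimes B \to D^* \otimes D$ with dilation $(C', k')$, I would exhibit a dilation of $\psi \circ \varphi$ with ancilla $C \otimes C'$ and a suitably rewired composite of $k$ and $k'$; diagrammatically this is just sliding the two caps together and fusing the inner wires, so associativity and the unit laws are inherited from $\mathcal{C}$.

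Next I would equip $\CPM{\mathcal{C}}$ with its monoidal and $\dagger$-compact structure. The key closure lemma is that the tensor of two completely positive morphisms is again completely positive: if $\varphi_i$ has dilation $(C_i, k_i)$, then $\varphi_1 \otimes \varphi_2$ has dilation $(C_1 \otimes C_2, k_1 \otimes k_2)$ after permuting wires with the symmetry of $\mathcal{C}$, while the dagger $\varphi^\dagger$ reuses the dilation $(C, k^\dagger)$. For compact closure I would \emph{transport} the cups and caps of $\mathcal{C}$ along $\cpmpure$, taking the $\CPM{\mathcal{C}}$-cup to be $\cpmpure(\eta)$ and likewise for caps; the snake equations then follow immediately from the snake equations in $\mathcal{C}$ together with the fact, proved below, that $\cpmpure$ is a strict monoidal functor. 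This is the cleanest route, since it avoids re-deriving the coherence axioms by hand.

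It then remains to analyse the functor $\cpmpure : k \mapsto k_* \otimes k$. Well-definedness is the observation that $k_* \otimes k$ is completely positive with the trivial ancilla $C = I$ (so $\eta_C$ disappears); functoriality follows from bifunctoriality of $\otimes$ and the fact that conjugation is itself a functor, giving $\cpmpure(g \circ f) = (g_* \circ f_*) \otimes (g \circ f) = (g_* \otimes g) \circ (f_* \otimes f)$ and $\cpmpure(1_A) = 1_{A^* \otimes A}$. Preservation of the $\dagger$-compact structure is then formal. Faithfulness up to global phase is the final step: in $\FHilb$ one identifies $k$ with a vector $\ket{v}$ under vectorisation, so that $\cpmpure(k)$ becomes the rank-one positive operator $\ket{v}\bra{v}$, which pins $\ket{v}$ down up to a phase; conversely any unit scalar $\phi$ with $\phi_* \phi = 1$ is plainly killed by the doubling.

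The main obstacle I anticipate is not any single computation but the bookkeeping required to show that \emph{all} of the structural closure properties hold simultaneously and compatibly — in particular that the chosen dilations for composites and tensors interact correctly with the symmetry and the dagger, so that the $\dagger$-compact axioms genuinely descend to $\CPM{\mathcal{C}}$. The graphical calculus is exactly what makes this tractable: each closure claim becomes a planar deformation of doubled diagrams rather than an equation between nested composites, so the real work is setting up the doubling convention carefully enough that these deformations are manifestly valid.
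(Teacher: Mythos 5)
The paper does not prove this statement at all: it is imported verbatim from Selinger's CPM-construction paper (\cite{selinger2007}) and used as a black box, so there is no in-paper argument to compare yours against. Judged against Selinger's actual proof, your sketch follows essentially the same strategy (doubled diagrams, explicit dilations for composites and tensors, transporting cups and caps along $\cpmpure$, and functoriality from bifunctoriality of $\otimes$ plus functoriality of conjugation), and the outline is sound. Two points deserve tightening. First, the dagger: if $\varphi$ has dilation $(C,k)$ with $k : C \otimes A \to B$, then $k^\dagger : B \to C \otimes A$ does not itself have the shape of a Kraus morphism for $\varphi^\dagger$; you must bend the ancilla wire with a cup, obtaining a dilation with ancilla $C^*$ and Kraus morphism $(\epsilon_C \otimes 1_A) \circ (1_{C^*} \otimes k^\dagger) : C^* \otimes B \to A$. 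This is only wire bookkeeping, but saying the dagger ``reuses the dilation $(C,k^\dagger)$'' is not literally type-correct. Second, your faithfulness argument via vectorisation and rank-one operators is specific to $\FHilb$, whereas the theorem is stated for an arbitrary $\dagger$-compact closed $\mathcal{C}$; the direction that phases are killed by doubling is general, but the converse (that $k_* \otimes k = h_* \otimes h$ forces $k$ and $h$ to differ by a unit scalar) is exactly the part that needs either a restriction to $\FHilb$ or the hedged reading Selinger gives it, which is presumably why both he and the present paper put ``up to a global phase'' in quotation marks. Neither point invalidates the approach, but both should be flagged if you were to write the proof out in full.
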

It follows that applying the $\mathbf{CPM}$ construction to $\FHilb$, we obtain a $\dagger$-compact closed category. The objects of the category are finite dimensional Hilbert spaces, and the morphisms are completely positive maps. The compact closed structure is summarised in table \ref{tab:dc}. Objects of $\CPM{\FHilb}$ can be given a Frobenius algebra, summarised in table \ref{tab:frob}.
\begin{table}[h!]
\centering
\caption{Table of diagrams in $\CPMC$ and $\mathcal{C}$}
\label{tab:dc} 
\begin{tabular}{ c c}
  $\CPMC$ & $\mathcal{C}$\\
  \hline
  $E(\epsilon) = \epsilon_* \otimes \epsilon$ & $\epsilon : A^* \otimes A^* \otimes A \otimes A \rightarrow I$ \\
  \input{tikz/cup_cpmc.tikz}&\input{tikz/cup_c.tikz}\\
  \multicolumn{2}{c}{$\epsilon: \ket{e_i} \otimes \ket{e_j} \otimes \ket{e_k} \otimes \ket{e_l} \mapsto \braket{e_i | e_k}\braket{e_j| e_l}$} \\
&\\
 $E(\eta) = \eta_* \otimes \eta$ &  $\eta : I \rightarrow A \otimes A\otimes A^* \otimes A^* $\\
\input{tikz/cap_cpmc.tikz}&\input{tikz/cap_c.tikz}\\
\multicolumn{2}{c}{$\eta: 1 \mapsto \sum_{ij}\ket{e_i} \otimes \ket{e_j} \otimes \ket{e_i} \otimes \ket{e_j} $}\\
&\\
\input{tikz/tensor_cpmc.tikz} &\input{tikz/tensor_c.tikz}\\
\multicolumn{2}{c}{$f_1 \otimes f_2:A^* \otimes C^*\otimes C\otimes A \rightarrow B^* \otimes D^* \otimes D \otimes B $}\\
  \hline
\end{tabular}
\end{table}

\begin{table}[h!]
\centering
\caption{Table of diagrams for Frobenius algebras in $\CPMC$ and $\mathcal{C}$}
\label{tab:frob} 
\begin{tabular}{ c c}
  $\CPMC$ & $\mathcal{C}$\\
  \hline
    $E(\mu) = \mu_* \otimes \mu$ & $\mu : A^* \otimes A \otimes A^* \otimes A \rightarrow A^* \otimes A$ \\
  \input{tikz/mu_cpmc.tikz}&\input{tikz/mu_c.tikz}\\
  \multicolumn{2}{c}{$\mu: \ket{e_i} \otimes \ket{e_j} \otimes \ket{e_k} \otimes \ket{e_l} \mapsto \braket{e_i| e_k}\braket{ e_j| e_l}(\ket{e_i} \otimes \ket{e_j})$} \\
$E(\Delta) = \Delta_* \otimes \Delta$ &  $\Delta : A^* \otimes A  \rightarrow A^* \otimes A \otimes A^* \otimes A $\\
\input{tikz/Delta_cpmc.tikz}&\input{tikz/Delta_c.tikz}\\
\multicolumn{2}{c}{$\Delta: \ket{e_i} \otimes \ket{e_j} \mapsto \sum_{ij}\ket{e_i} \otimes \ket{e_j} \otimes \ket{e_i} \otimes \ket{e_j} $} \\
$E(\iota) = \iota_* \otimes \iota$ & $\iota :  A^* \otimes A \rightarrow I$ \\
  \input{tikz/iota_cpmc.tikz}&\input{tikz/iota_c.tikz}\\
  \multicolumn{2}{c}{$\iota: \ket{e_i} \otimes \ket{e_j} \mapsto 1$} \\
$E(\zeta) = \zeta_* \otimes \zeta$ &  $\zeta : I \otimes A^* \otimes A  $\\ 
\input{tikz/xi_cpmc.tikz}&\input{tikz/xi_c.tikz}\\
\multicolumn{2}{c}{$\zeta: 1 \mapsto \sum_{i}\ket{e_1} \otimes \ket{e_i}$} \\ 
  \hline
\end{tabular}
\end{table}

\subsection{Pregroup grammar}
We use Lambek's pregroup grammar \cite{La99}.  A pregroup   $(P, \leq, \cdot, 1, (-)^l, (-)^r)$ is a partially ordered monoid $(P, \leq, \cdot, 1)$ where each element $p\in P$ has a left adjoint $p^l$ and a right adjoint $p^r$, such that the following inequalities hold:
\begin{equation}
  \label{eq:preg}
  p^l\cdot p \leq 1 \leq p\cdot p^l \quad \text{ and } \quad p\cdot p^r \leq 1 \leq p^r \cdot p
\end{equation}
Intuitively, we think of the elements of a pregroup as linguistic types. The monoidal structure allows us to form composite types,
and the partial order encodes type reduction. The important right and left adjoints then enable the introduction of types requiring
further elements on either their left or right respectively.

We understand a pregroup as a compact closed category in the following way. The objects of the category are the elements of the set $P$. The tensor and unit are the monoid multiplication and unit  $\cdot$, $1$, and cup and caps are the morphisms witnessed by the inequalities in \eqref{eq:preg}.

The pregroup grammar~$\freepreg{\mathcal{B}}$ over an alphabet~$\mathcal{B}$ is freely constructed from the atomic types in~$\mathcal{B}$. 
Here we use an alphabet $\mathcal{B} = \{n, s\}$, where we use the type $s$ to denote a declarative sentence and $n$ to denote a noun. A transitive verb can then be denoted $n^r s n^l$. If a string of words and their types reduces to the type $s$, the sentence is judged grammatical. The sentence \lang{dogs chase cars} is typed $n~(n^r s n^l)~ n$, and can be reduced to $s$ as follows: 
\[
n~(n^r s n^l)~ n \leq 1\cdot s n^l n \leq 1 \cdot s \cdot 1 \leq s
\]
This symbolic reduction can also be expressed graphically, as shown in figure~\ref{fig:reduction}.
\begin{figure}[htbp]
\centering
\input{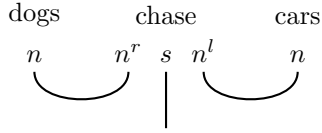}
\caption{A transitive sentence in the graphical calculus.}
\label{fig:reduction}
\end{figure}

\subsection{Mapping from grammar to meaning}
\label{sec:preg}

We now describe a functor from the grammar category $\freepreg{\{n, s\}}$ to $\CPM{\FHilb}$ that tells us how to compose word representations to form phrases and sentences.  The reductions of the pregroup grammar  may be mapped into $\CPM{\FHilb}$ using a strong monoidal functor $\cpmsem$:
\[
\cpmsem: \mathbf{Preg} \rightarrow \CPM{\FHilb}
\]
Strong monoidal functors automatically preserve the compact closed structure.
For our example~$\freepreg{\{n,s\}}$, we must map the noun and sentence types to appropriate finite dimensional vector spaces:
\[
\cpmsem(n) = N^* \otimes N \qquad \cpmsem(s) = S^* \otimes S
\]
where $N$, $S$ are finite dimensional Hilbert spaces, i.e.\ objects of $\FHilb$.
Composite types are then constructed functorially using the corresponding structure in $\FHilb$.
Each morphism $\alpha$ in the pregroup is mapped to a completely positive map interpreting sentences of that grammatical type. Since the only basic morphisms in the pregroup are identity, cups, and caps, $\alpha$ consists of tensor products and compositions of these. Then, given psd matrices for words $\sem{w_i}$ with pregroup types $p_i$, and  a type reduction in the pregroup grammar $\alpha: p_1, p_2, ... p_n \rightarrow s$, the meaning of the sentence $w_1 w_2 ... w_n$ is given by:
\[
\sem{w_1 w_2 ... w_n} = \cpmsem(\alpha)(\sem{w_1} \otimes \sem{w_2} \otimes ... \otimes \sem{w_n})
\]

\begin{exa}
  Let the space~$N$ be a real Hilbert space with basis vectors given by~$\{\ket{n_i}\}_i$, and suppose we have 
  \[
  \sem{\lang{cars}} = \sum_{ij} c_{ij}\ket{n_i}\bra{n_j}, \quad  \sem{\lang{dogs}} = \sum_{kl}  d_{kl}\ket{n_k}\bra{n_l}
  \]
 Let $S$ have basis $\{\ket{s_i}\}_i$. The verb $\sem{\lang{chase}}$ is given by:
\[
\sem{\lang{chase}} = \sum_{pqrtuv} C_{pqrtuv} \ket{n_p}\bra{n_t} \otimes \ket{s_q}\bra{s_u} \otimes \ket{n_r}\bra{n_v}
\]

The meaning of the composite sentence is~$(\epsilon_{N^* \otimes N} \otimes 1_{S^*\otimes S} \otimes \epsilon_{N^* \otimes N})$
applied to~$(\sem{\lang{dogs}} \otimes \sem{\lang{chase}} \otimes \sem{\lang{cars}})$ as shown below in \eqref{eq:ts},
with interpretation in~$\FHilb$ shown in \eqref{eq:tscpmc}.
\begin{equation}
\label{eq:ts}
\CPM{\FHilb}: \begin{gathered} \input{tikz/sentence_cpmc.tikz} \end{gathered}
\end{equation}
\begin{equation}
\label{eq:tscpmc}
\FHilb: \begin{gathered} \input{tikz/sentence_c_mixed.tikz} \end{gathered} \cong
\begin{gathered} \input{tikz/sentence_c_boxes.tikz} \end{gathered}
\end{equation}
\end{exa}
}

\end{document}